\let\proof\@undefined
\let\endproof\@undefined
\newcommand{\eqnum}{\refstepcounter{equation}\textup{\tagform@{\theequation}}}
\newcommand{\RN}[1]{%
	\textup{\uppercase\expandafter{\romannumeral#1}}%
}
\definecolor{mypink}{rgb}{0.858, 0.188, 0.478}
\definecolor{mygray}{rgb}{0.5, 0.5, 0.5}
\newcommand{\real}{\mathbb{R}}
\newcommand{\cl}[1]{{\mathcal{#1}}}
\theoremstyle{definition}
\newtheorem{definition}{Definition}
\newtheorem{problem}{Problem}
\newtheorem{lemma}[]{Lemma}
\newtheorem{proposition}[]{Proposition}
\def\vd{3pt}
\begin{document}
	
	\title{Scheduling Operator Assistance for Shared Autonomy in Multi-Robot Teams}
	\author{Yifan Cai, Abhinav Dahiya, Nils Wilde, Stephen L. Smith
		\thanks{This research is supported in part by the Natural Sciences and Engineering Research Council of Canada (NSERC) and in part by the Innovation for Defence Excellence and Security (IDEaS) Program of the Canadian Department of National Defence through grant CFPMN2-037.} 
		\thanks{Y.\ Cai, A.\ Dahiya, and S.\ L.\ Smith are with the Department of Electrical and Computer Engineering, University of Waterloo, Waterloo, ON N2L 3G1, Canada \{{\tt\small yifan.cai, abhinav.dahiya, stephen.smith\}@uwaterloo.ca}.  N.\ Wilde is with the Cognitive Robotics Department, Delft University of Technology, Netherlands ({\tt\small N.Wilde@tudelft.nl}).}
	}
	
	\maketitle

	\begin{abstract}
		
		In this paper, we consider the problem of allocating human operator assistance in a system with multiple autonomous robots.  Each robot is required to complete independent missions, each defined as a sequence of tasks.  While executing a task, a robot can either operate autonomously or be teleoperated by the human operator to complete the task at a faster rate.  We show that the problem of creating a teleoperation schedule that minimizes makespan of the system is NP-Hard.  We formulate our problem as a Mixed Integer Linear Program, which can be used to optimally solve small to moderate sized problem instances.  We also develop an anytime algorithm that makes use of the problem structure to provide a fast and high-quality solution of the operator scheduling problem, even for larger problem instances.  Our key insight is to identify \textit{blocking tasks} in greedily-created schedules and iteratively remove those blocks to improve the quality of the solution.  Through numerical simulations, we demonstrate the benefits of the proposed algorithm as an efficient and scalable approach that outperforms other greedy methods.
	\end{abstract}
	\section{Introduction}
	Autonomous mobile robot teams have been widely used in manufacturing and related sectors resulting in improved productivity and reduced risk to human workers. 
	Such robot teams are able to function autonomously on their own, while also bearing the capability of making use of human assistance to further improve their performance \cite{rosenfeld2017intelligent, khasawneh2019human, zheng2013supervisory}.  As it is challenging for human operators to supervise and assist a large number of robots on their own \cite{chen2012supervisory, chien2013imperfect}, a number of studies in the literature propose effective decision support systems (DSS) to aid the human operator(s) in providing assistance \cite{dahiya2021scalable, swamy2020scaled, rosenfeld2017intelligent}.

	
	In this paper, we present such a DSS for a multi-robot system comprising a fleet of autonomous robots with a human operator available to teleoperate the robots to speed up their missions, given their availability.  Figure~\ref{fig:main} presents an overview of the problem setup, showing $K$ robots navigating in a city-block-like environment and going through a series of tasks.  A task in this example may refer to navigating through the robot route, crossing a road, going through a crowded area, and etc. 
	Each task is characterized by different completion times, depending on whether the task is executed autonomously or under teleoperation. There is a human operator available, who can assist/teleoperate at most one robot at a time. All robots are capable of completing their respective tasks on their own, but can be assisted by a human operator to speed up the task completion. The DSS provides the operator with a teleoperation schedule that specifies which task of a robot should be executed via teleoperation, and in what order. If a robot task is scheduled for teleoperation, then the robot and operator must \emph{wait} for each other to be available before starting this task. Thus, a schedule specifies the teleoperation actions for the operator and the wait actions for all robots and the operator.
	
	The problem objective is to find a teleoperation schedule for both the human operator and robots that minimizes the time taken until all robot missions are complete.
	%
	
	\begin{figure}[t]
		\centering
		\includegraphics[width=0.9\columnwidth]{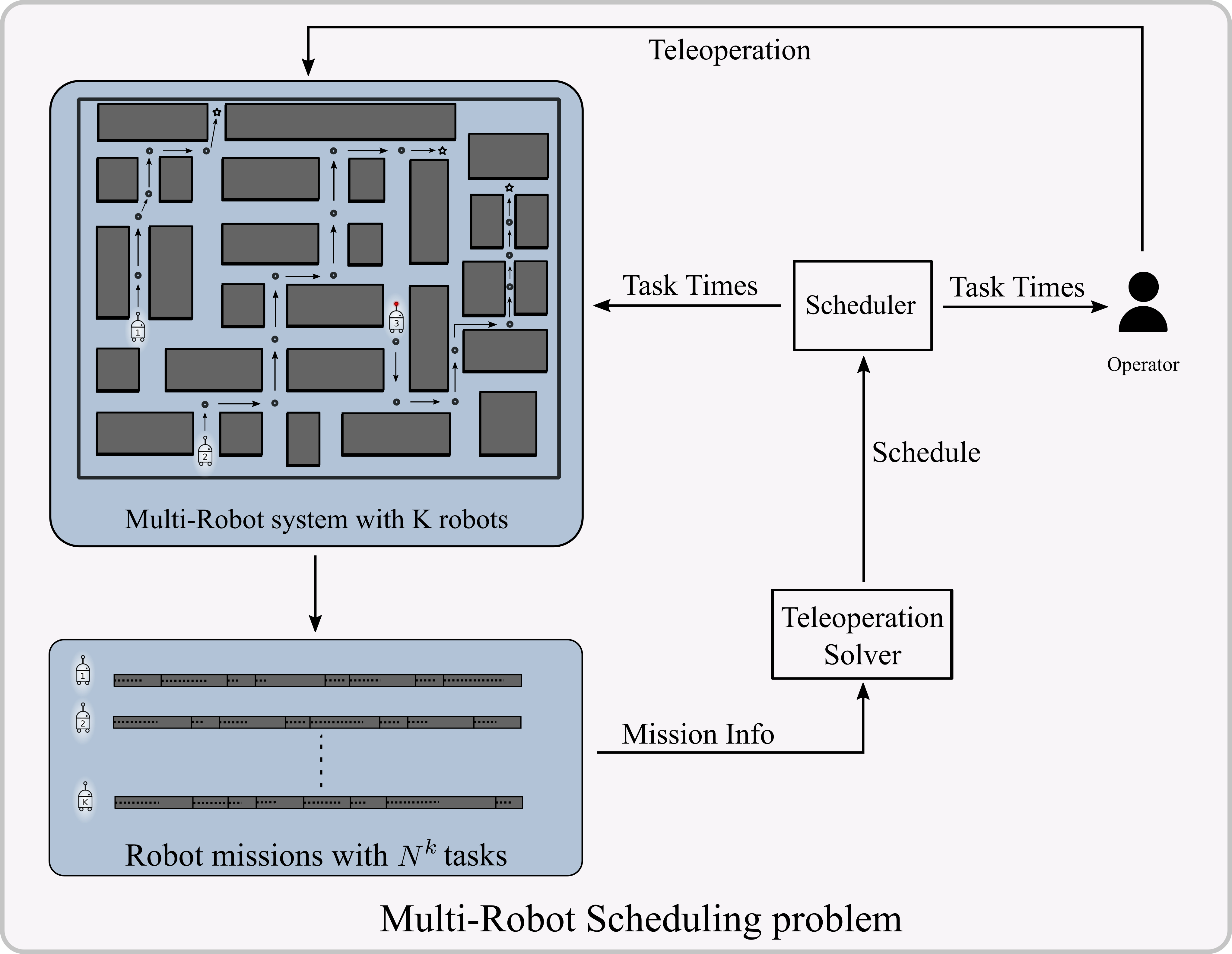}
		\caption{Information flow in the multi-robot teleoperation scheduling system with $K$ robots. A robot $k$ are assigned with an independent series of  tasks. Given mission information, solver computes the schedule, which is then converted to information about task timing for both operator and robots.  The operator assists on tasks assigned by the schedule.}
		\label{fig:main}
	\end{figure}
	Our work makes the following contributions:
	
	1) We show that the operator scheduling problem for multiple robots is NP-Hard using a reduction from a variant of the Satisfiability problem called \textit{2p1n-3SAT} problem. 
	
	2) We formulate a Mixed Integer Linear Program (MILP) that can be used to generate optimal schedules for the given problem.  We also present an extension to a multiple-operator version of the problem.
	
	3) We develop an anytime algorithm that iteratively generates teleoperation schedules for the given problem.  The algorithm is capable of solving much larger instances of the given problem than the MILP formulation.
	
	4) We evaluate our proposed algorithm in numerical simulations.  The results show that our method provides an efficient and scalable solution compared to other approaches.
	
	\subsection{Background and Related Works}
	\label{sec:background}
	Human-multi-robot teams have found their application in search-and-rescue \cite{ren2017cooperation}, smart factory operation \cite{huang2019multi}, home care for seniors \cite{benavidez2015design}, and package delivery \cite{dahiya2021scalable}.  However, such a team composition also brings the risk of increasing operator workload and decreasing in their situational awareness  \cite{wong2017workload, riley2005situation}.  In \cite{chien2012scheduling}, the authors show that scheduling the operator's attention can improve the efficiency of control over multi-robot system. Therefore, such systems can benefit from having a DSS that decides how to distribute human assistance among different robots or autonomous systems \cite{chen2014human, rosenfeld2017intelligent}. 
	
	The problem of scheduling human assistance among multiple robots has similarities with disciplines of multi-robot supervision, queuing theory, and task scheduling and sequencing.
	All these studies propose some forms of DSS, where an advising agent guides the human operator(s) on a robot (or task) which they should assist, with specified time.  This advice can take form of an online allocation, like in \cite{dahiya2021scalable}, or a pre-determined offline schedule, like in \cite{hari2020approximation}.
	In human-supervised multi-robot systems, frameworks such as sliding autonomy that considers factors like coordination and situational awareness are shown to improve understanding of such systems~\cite{music2017control, dias2008sliding}.  Research on effective interaction interfaces also aims to facilitate human supervision of robot teams \cite{szafir2017designing, kirchner2016intelligent}.  Our work is concerned with providing instructions to human operator on how to allocate their attention among different robots.  
	%
	%
	In the queuing discipline, efficient techniques have been developed to enable a human to service a queue of tasks \cite{gupta2019optimal}.  However, the model that we study is different from a queuing model as it is possible for the robots to complete their tasks without the help of operators, and there is no pre-defined order in which tasks (of different robots) are required to be processed.  
	
	
	Related studies in scheduling literature present methods to schedule processing of different tasks to minimize performance metrics like makespan, idle time etc.  A common way of solving the scheduling problem is through the MILP formulation, which can be used to obtain optimal solutions for scheduling problems.  In the literature, we also find scalable methods to approximately solve a MILP for large instances which may take MILP hours to find the minima.  For example, the study presented in \cite{roslof2002solving} makes use of a heuristic procedure for a single machine job scheduling.  However, in our system not all tasks are required to be scheduled and tasks from different robots are not required to be in any particular order. Methods like rolling-horizon splits problems into smaller pieces based on time and pursue the local optimal \cite{bischi2019rolling}.  In contrast to the problem considered in \cite{bischi2019rolling}, our problem is highly-coupled over time, and thus there aren't natural breakpoints in time to decompose the problem. 
	
	
	The most related works to our problem are presented in \cite{zanlongo2021scheduling} and \cite{hari2020approximation}.  These studies propose solutions to scheduling of operators, and robot planning for multi-robot system having critical configurations where operator attention/input is required to proceed.  
	While sharing a similar goal with these studies (minimizing mission time), our system lacks the presence of any such critical configurations or states, and every task can be completed both autonomously and under teleoperation.  
	
	
	
	\section{Multi-robot Teleoperation Scheduling}

	
	We consider a system consisting of a human operator supervising a fleet of $K$ autonomous robots. Each robot $k\in {\cal K} \coloneqq \{1,\ldots, K\}$ is assigned a mission $p^k \in \cl{P} := \{p^1,\ldots, p^K\}$, which is a pre-defined sequence of tasks.  To complete its mission $p^k$ the robot $k$ is required to complete $N_k$ tasks. 
	The $j^{th}$ task of robot $k$ is denoted as $e^k_{j}$.  For each task, a robot can either operate autonomously or be teleoperated by the human operator.  Executing a task $e^k_{j}$ takes time $\alpha^k_{j}$ if the robot operates autonomously and time $\beta^k_{j} (\leq \alpha^k_{j})$ if it is teleoperated\footnote{In this paper we consider $\beta^k_{j} \leq \alpha^k_{j}$, i.e., teleoperation is at least as fast as autonomous operation.  However, even for cases when this condition does not hold, the analysis and algorithms presented in this paper apply without any changes, as the tasks where autonomous operation is faster than teleoperation are not considered for scheduling.}.
	
	There is a DSS that provides a teleoperation schedule for the operator.  A complete teleoperation schedule contains the information of when to start each task of every robot and which of the tasks are teleoperated.  This information also tell us if a robot or an operator needs to wait before starting a task. However, since the completion times for each task are known, this teleoperation schedule can be presented in a more compact form as only a sequence of teleoperated tasks. The timing information can be computed in polynomial time from this sequence using the time $\alpha^k_{j}$ and $\beta^k_{j}$.  
	
	For our problem, we consider a schedule $\cl{S}$ as a sequence of tasks $\langle s_1, \dots, s_n \rangle$ where each $s_i$ corresponds to some task $e^k_j$ for $k\in\{1,\ldots,K\}, j\in\{1,\ldots,N^k\}$ that is required to be teleoperated.  Once the mission starts, the human operator teleoperates the specific tasks in the order provided by the schedule $\cl{S}$, i.e., task $s_1$ followed by $s_2$ and so on.  If at the end of some task $s_i = e^k_j$, the robot $k$ is not yet ready for the required task (executing its previous tasks), the operator waits for the robot to arrive at the start of $e^k_j$.  Likewise, if the robot is ready for the task $s_i$, but the operator is still working on a previous task $s_{i'}$ where $i' < i$, then the robot waits for the operator.  
	
	The mission ends when all robots complete their respective sequence of tasks.  A common metric of measuring performance of such systems is the time elapsed until all robot missions are complete, called the \textit{makespan} \cite{mau2006scheduling}, denoted as $\mu(\cl{S}) \in \real_{>0}$.
	
	\subsection{Problem Statement} 
	We impose the following assumptions on the problem:
	\begin{enumerate}
		\item [\textbf{(A1)}] The operator can teleoperate at most one robot at a time.  
		\item [\textbf{(A2)}] A task's mode of operation cannot change once the task is started, i.e., an operator must teleoperate a robot throughout a task, and they cannot join a task which already started autonomously.
		\item [\textbf{(A3)}] All robots may start with the first task in their respective missions at or after the time $t=0$.
	\end{enumerate}
	
	The objective is to solve the following optimization.
	\begin{problem}\label{prob:main}
		Given the set $\cl{K}$ of robots, the missions  $\{p^1,\ldots, p^K\}$ for each robot, and the autonomous and teleoperation completion times $\alpha^k_j$ and $\beta^k_j$ for each task, find a schedule $\cl{S}$ that minimizes the makespan $\mu(\cl{S})$.
	\end{problem}
	To begin, we establish that this problem is NP-Hard.
	
	\subsection{Hardness Proof: Reduction from 2p1n-3SAT}
	To prove Problem~\ref{prob:main} is NP-hard, we introduce an NP-complete variant of Satisfiability called \textit{2p1n-3SAT}\cite{yoshinaka2005higher}.  
	In \textit{3SAT} problems, we are given a Boolean formula as a conjunction of several clauses where each clause is a disjunction of exactly 3 literals.  A literal is either a variable or its negation.  In \textit{2p1n-3SAT}, each variable shows up exactly twice, and its negation shows up exactly once \cite{yoshinaka2005higher}.  
	The \textit{2p1n-3SAT} problem and the decision version of the scheduling Problem~\ref{prob:main} are as follows: 
	
	\begin{problem} [\textit{2p1n-3SAT}] 
		Given a Boolean expression in the \textit{2p1n} format with $K$ clauses with $v$ variables, does there exist an assignment to the variables that makes the formula evaluate to true? 
		\label{prob:2p1n}
	\end{problem}
	
	\begin{problem}  [\textit{Min-Makespan}] Given $K$ robots, the missions $\{p^1,\ldots, p^K\}$ for each robot, and the autonomous and teleoperation task completion times $\alpha^k$ and $\beta^k$ for every task of each robot\footnote{Since task completions are same for all tasks, we omit subscript $j$ from expressions of $\alpha$ and $\beta$ for ease of notation.}, and a target time $\overline{\mu}\in\real_{\geq0}$, does there exist a schedule $\cl{S}$ that results in a makespan $\mu(\cl{S}) \leq \overline{\mu}$? 
		\label{prob:schedule-D}
	\end{problem}
	
	\begin{proposition} \label{prop:NP}
		Problem~\ref{prob:schedule-D} (\textit{Min-Makespan}) is NP-Complete.
	\end{proposition}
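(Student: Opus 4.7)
The plan is to establish NP-completeness by showing that Min-Makespan lies in NP and that 2p1n-3SAT reduces to it in polynomial time. Membership in NP follows immediately from the compact schedule form given earlier: for a candidate sequence $\cl{S}=\langle s_1,\ldots,s_n\rangle$, simulating the operator and the robots forward through time using the wait rules takes time linear in $|\cl{S}|+\sum_k N_k$, after which $\mu(\cl{S})\leq\overline{\mu}$ can be checked directly.

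For NP-hardness I would reduce from a 2p1n-3SAT formula $\phi$ with $v$ variables and $K$ clauses (note that $v=K$, since each variable contributes $3$ literal occurrences and each clause contains exactly $3$ literals). The natural skeleton of the reduction assigns one robot per clause, with one task per literal occurrence in that clause's mission. Teleoperating the task corresponding to a literal $\ell$ should encode ``$\ell$ takes value true.'' Uniform autonomous and teleoperation times $\alpha,\beta$ and a threshold $\overline{\mu}$ are to be chosen so that $\mu(\cl{S})\leq\overline{\mu}$ is achievable if and only if the induced teleoperation pattern corresponds to a consistent truth assignment that satisfies every clause. The clause-satisfaction half is the easier direction: setting $\overline{\mu}$ below the all-autonomous completion time $3\alpha$ forces each robot to have at least one teleoperated task, thereby forcing the corresponding clause to be satisfied.

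The main obstacle is enforcing variable consistency, i.e., preventing a schedule from teleoperating both a positive $x$-task and the negative $\neg x$-task for the same variable. This cannot be obtained from the single-operator rule alone, since teleoperation events can in principle be spread out over time. My plan is to construct a variable gadget that pads the robot missions so that all three tasks involving the same variable are forced to become ready within a single narrow time window; inside that window, the tight budget $\overline{\mu}$ together with the ``at most one robot at a time'' rule allows the operator to serve at most one of them, implicitly pinning each variable to a single truth value. Tuning the padding lengths and the threshold $\overline{\mu}$ so that the window is inescapable, while keeping it consistent with the global one-teleoperation-per-clause budget, is the main technical step. Once the gadget is tuned, the equivalence between satisfying assignments of $\phi$ and schedules of makespan at most $\overline{\mu}$ is direct in both directions, and the construction is clearly polynomial in $|\phi|$, completing the proof.
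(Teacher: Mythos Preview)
Your overall plan coincides with the paper's: reduce from 2p1n-3SAT with one robot per clause, pad the missions so that the three occurrences of each variable are time-aligned, and let the single-operator constraint enforce consistency. The gap is in the gadget you actually commit to. You want the three occurrences of a variable to land in one narrow window in which the operator can serve \emph{at most one} of them. That does enforce consistency in the schedule$\Rightarrow$assignment direction, but it breaks the assignment$\Rightarrow$schedule direction: if $x$ is true in a satisfying assignment, \emph{both} clauses containing the positive literal $x$ may need that literal as their witnessing true literal (the other two literals in each of those clauses can all be false under the assignment). Your window would then require two teleoperations, which your own gadget forbids. With $v=K$ windows and at most one teleoperation per window you are implicitly demanding a system of distinct representatives from clauses to variables through true literals, and you give no argument that some satisfying assignment of $\phi$ always admits one.

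The paper sidesteps this by exploiting the 2p1n asymmetry \emph{inside} each variable's slot rather than collapsing it. Every robot's mission contains a length-$2z$ block for every variable (realised as a dummy task with $\alpha=\beta=2z$ when the variable is absent from that clause). Within that block, the first positive occurrence has its effective task in the first half, the second positive occurrence has its effective task in the second half, while the negative occurrence is a single effective task of length $2z$ spanning the whole block. Hence the operator can teleoperate both positive occurrences back to back (encoding $x=\text{true}$), or the single negative alone (encoding $x=\text{false}$), but never a positive together with the negative. This half-slot versus full-slot trick is the missing idea; your ``one narrow window, at most one served'' gadget does not supply it, and the remaining ``tuning'' you defer cannot recover it without changing the gadget.
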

	\begin{proof}
		We begin the proof by proposing a reduction from Problem~\ref{prob:2p1n} to Problem~\ref{prob:schedule-D}:
		
			\emph{Reduction: }
			To reduce an instance $\phi$ of Problem~\ref{prob:2p1n} into an instance $\psi$ of Problem~\ref{prob:schedule-D}, we replace the literals in the Boolean formula with tasks in robots missions.  Each clause in the formula corresponds to a robot in the scheduling problem.  Fixing the order of variables arbitrarily, we create robot a mission from literals in a clause using the following four rules:
			
			\begin{enumerate}
				\item If a literal is the first positive appearance of a variable in the Boolean formula $\phi$, we add two consecutive tasks in robot's mission.  The first task has autonomous completion time $\alpha = z$ and teleoperation time $\beta = z-\delta z$, for some $z,\delta z\in\mathbb{Z}_{>0}$ and $0 < \delta z \ll z$ (e.g., $z=100, \delta z=1$).  For the second task, $\alpha = \beta = z$.
				
				\item If a literal is the second positive appearance of a variable in $\phi$, we again add the same two tasks as the first case, but in reverse order. 
				
				\item If a literal is the negative appearance of a variable, we add a single task in robot's mission with $\alpha = 2z$ and $\beta = 2z - \delta z$.
				
				\item For each variable that is not present in the clause (when there are more than three variables), we add a single task in robot's mission with $\alpha = \beta = 2z$.
				
			\end{enumerate}
			
			
			With this reduction, an instance of Problem~\ref{prob:2p1n} with $v$ variables is converted to an instance of Problem~\ref{prob:schedule-D} with $\overline{\mu} = 2zv$.
			An example reduction is illustrated in Fig.~\ref{fig:trans}.
			
			
			\begin{figure}[thpb]
				\centering
				\includegraphics[width=\columnwidth]{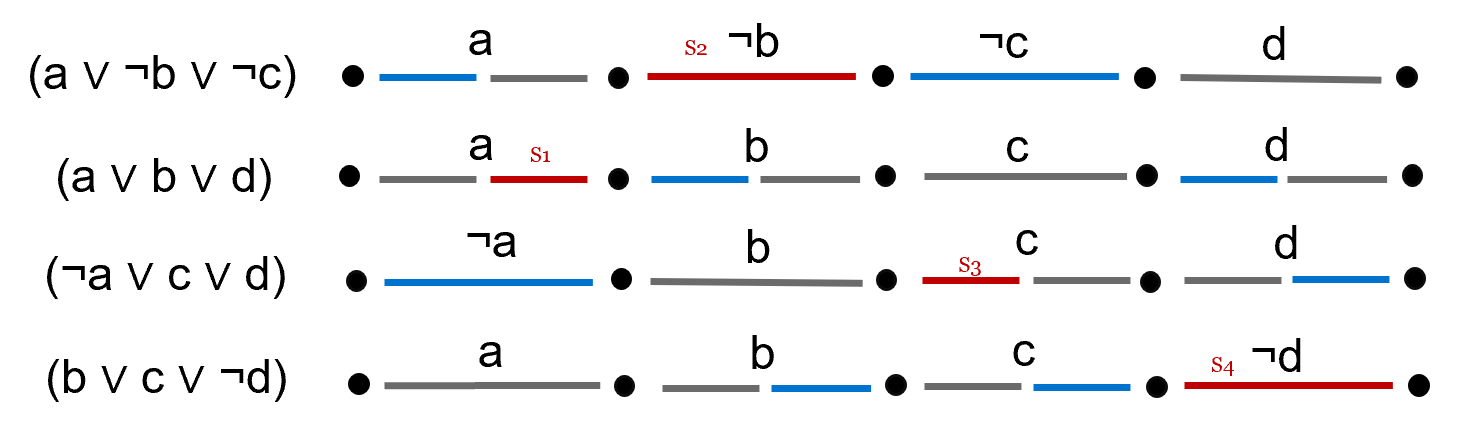}
				\caption{Converting \textit{2p1n-3SAT} formula $\phi$ with four clauses to missions for four robots.  Each mission is shown as a sequence of differently-colored tasks.  Blue and red tasks have a difference of $\delta z$ between their $\alpha$ and $\beta$, and grey tasks have no difference between their $\alpha$ and $\beta$.}
				\label{fig:trans}
			\end{figure}
			
		
		Under the reduction given in Proposition~\ref{prop:NP}, the generation of $\psi$ from $\phi$ takes polynomial time as one needs to parse each literal in every clause only once, and generate tasks in robots missions (a constant time operation) for each of those literals. 
		
		Trivially, \textit{Min-Makespan} is in NP. Given a certificate $\langle \cl{S}, \psi \rangle$, it takes polynomial time to verify the certificate, by computing the makespan of $\psi$.
		
		Therefore, the NP-Completeness of Problem~\ref{prob:schedule-D} follows directly from Lemmas~\ref{lem:fwd_implication} and~\ref{lem:rev_implication}.
	\end{proof}

	\begin{lemma} \label{lem:fwd_implication}
		Under the reduction given in Proposition~\ref{prop:NP}, if $\phi$ is a true instance of Problem~\ref{prob:2p1n}, then $\psi$ is a true instance of Problem~\ref{prob:schedule-D}.  
	\end{lemma}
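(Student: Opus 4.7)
The plan is to build a schedule $\cl{S}$ directly from a satisfying assignment $A$ of $\phi$, placing exactly one teleoperated task in each robot. For each clause $C_k$, I pick any literal $\ell_k \in C_k$ that evaluates to true under $A$; such a literal exists by satisfiability. Let $i_k \in \{1,\ldots,v\}$ be the variable position of $\ell_k$ in robot $k$'s mission. The schedule $\cl{S}$ consists of, for each robot $k$, the single teleoperatable task at position $i_k$: the first of the two tasks when $\ell_k$ is the first positive appearance of $x_{i_k}$, the second of the two tasks when it is the second positive appearance, and the lone task when $\ell_k = \bar{x}_{i_k}$. These teleoperations are listed in $\cl{S}$ in order of increasing $i_k$, with ties at the same position broken by placing first-positive before second-positive.

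The central step is verifying that the operator can serve all of these tasks without conflict. Because each robot carries only one teleoperated task, no earlier task of that robot is shortened, and the robot reaches the start of its position-$i_k$ slot at exactly time $2z(i_k-1)$. Within the slot $[2z(i-1),2zi]$ for a fixed variable position $i$, the three possible teleoperation windows are $[2z(i-1),\, 2z(i-1)+z-\delta z]$ for a first positive, $[2z(i-1)+z,\, 2zi-\delta z]$ for a second positive, and $[2z(i-1),\, 2zi-\delta z]$ for the negative appearance. Under a consistent truth assignment the literals chosen at position $i$ across different clauses are either (i) only positive appearances of $x_i$, when $A(x_i)$ is true, whose two windows lie in the two halves of the slot separated by a $\delta z$ buffer; or (ii) the unique negative appearance, when $A(x_i)$ is false. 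Because $A(x_i)$ cannot be both true and false, the only dangerous overlap---a positive and a negative window of the same variable---is ruled out, and operator feasibility follows.

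With feasibility established, the makespan calculation is immediate: no robot ever waits for the operator, and each robot completes a single teleoperated task that saves $\delta z$, so every robot finishes by time $2zv - \delta z$, giving $\mu(\cl{S}) \leq 2zv = \overline{\mu}$.

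I expect the main obstacle to be the feasibility case analysis in the second step. It requires enumerating the three teleoperation windows inside a slot, verifying that two robots whose chosen literals share a variable position indeed reach that position at the same absolute time (which uses the fact that each robot carries only one teleoperated task, so no prior savings shift its clock), and checking that the $\delta z$ gap between the two positive windows is enough to serialize them on the operator. The satisfiability hypothesis enters the argument only through the guarantee that a single consistent truth assignment supplies the chosen literals, which is exactly what prevents the positive and negative appearances of any variable from both being scheduled.
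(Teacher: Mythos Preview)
Your proof is correct and follows essentially the same approach as the paper: select one true literal per clause from a satisfying assignment, teleoperate the corresponding ``effective'' task, and argue that the resulting windows are non-overlapping so that each robot saves $\delta z$ and finishes at $2zv-\delta z$. Your version is in fact more explicit than the paper's---you compute the three teleoperation windows inside a variable slot, justify the ordering (first-positive before second-positive) needed to serialize two positive appearances, and confirm that no robot waits for the operator---whereas the paper only asserts that ``effective tasks corresponding to positive variables in different clauses do not overlap'' and leaves the negative-literal case and the ordering implicit.
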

	\begin{proof}
		For the purpose of the proof, we call the tasks in $\psi$ where $\alpha > \beta$ as \textit{effective} tasks.  
		In order to have $\phi$ to be true, each clause must have at least one true literal by definition.  The reduction is designed in a way that effective tasks corresponding to positive variables in different clauses do not overlap with each other.  Therefore, if $\phi$ is a true instance of Problem~\ref{prob:2p1n}, then it is possible to teleoperate the effective tasks corresponding to at least one true literal in each clause.
		
		This will result in a reduction of at least $\delta z$ in the travelling time of each robot, so the makespan is less than $2zv$.  For example, Fig.~\ref{fig:trans} shows a true instance of the problem, and the schedule marked as red-colored tasks $\langle s_1, s_2, s_3, s_4 \rangle$ results in a makespan $\mu(\cl{S}) = 2zv - \delta z$. 
		%
	\end{proof}
	
	\begin{lemma} \label{lem:rev_implication}
		Under the reduction given in proposition~\ref{prop:NP}, if $\psi$ is a true instance of Problem~\ref{prob:schedule-D}, then $\phi$ is a true instance of Problem~\ref{prob:2p1n}.  
	\end{lemma}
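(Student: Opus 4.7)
The plan is to extract a satisfying assignment for $\phi$ directly from a schedule $\cl{S}$ meeting the target makespan. I first observe that without loss of generality $\cl{S}$ teleoperates only \emph{effective} tasks (those with $\alpha > \beta$): removing a teleoperation of a non-effective task never delays the assigned robot and only frees the operator earlier for subsequent teleoperations, so the makespan cannot increase. I then define an assignment $\tau$ by setting $\tau(x) = \text{TRUE}$ if and only if at least one of the positive effective tasks for $x$ is teleoperated in $\cl{S}$.

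The heart of the proof is a consistency claim: for every variable $x$, the schedule $\cl{S}$ cannot teleoperate both a positive effective task for $x$ and the negative effective task for $x$. Granted this, the proof finishes cleanly. For each clause $c_k$, robot $k$'s autonomous completion time is exactly $2zv$, so achieving finish time at most $2zv - \delta z$ forces at least one effective task of robot $k$ to be teleoperated. Whichever literal $\ell$ of $c_k$ corresponds to that teleoperated effective task is then satisfied by $\tau$: if $\ell$ is positive, $\tau$ sets its variable TRUE by construction; if $\ell$ is negative, the consistency claim guarantees that no positive effective task for the same variable was teleoperated, so $\tau$ sets the variable FALSE. Every clause thus has a true literal under $\tau$ and $\phi$ is satisfiable.

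To establish the consistency claim, I argue by contradiction. Suppose that for some variable $x$ at ordinal position $h$ in the variable ordering, both a positive effective task $e_p$ (of duration $z - \delta z$, in some robot $i$) and the negative effective task $e_n$ (of duration $2z - \delta z$, in robot $l$) are teleoperated. By the construction of the reduction, each of robots $i$ and $l$ arrives at its $x$-slot at time approximately $2z(h-1)$ (any earlier arrival from previous teleoperations contributes only $O(\delta z)$ per prior effective task) and must finish the slot by approximately $2zh$ to respect the overall deadline. The operator, acting serially, however needs total busy time $3z - 2\delta z$ on these two $x$-tasks, which cannot fit in the joint availability window of width $\approx 2z$. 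Consequently, at least one of robots $i, l$ must wait for the operator for time at least $z - \delta z$. But each robot possesses only three effective tasks (one per literal in its clause), so the maximum cumulative saving available to that robot is $3\delta z$, which for $z \gg \delta z$ cannot compensate the wait; the robot's finish time therefore exceeds $2zv - \delta z$, contradicting the assumption $\mu(\cl{S}) \leq 2zv - \delta z$.

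The main obstacle will be making the timing argument in the consistency claim fully rigorous across every possible interleaving of $e_p$, $e_n$, and the other teleoperations in the operator's sequence, and across robots whose earlier slots may have shifted by small amounts due to chained savings and waits. The cleanest route is likely to work with the per-robot identity $F_k = 2zv - t_k \delta z + W_k$, where $t_k$ counts effective tasks teleoperated for robot $k$ and $W_k$ is its total waiting time, and to show that any teleoperation of both a positive and a negative effective task for some variable forces $W_k$ for at least one involved robot to exceed the available savings budget $t_k \delta z \leq 3 \delta z$.
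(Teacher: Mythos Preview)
Your approach is correct and mirrors the paper's own argument: both proofs observe that every robot must have at least one effective task teleoperated to beat the fully autonomous time $2zv$, invoke a consistency claim that the effective tasks for a variable and its negation cannot both be teleoperated in a feasible schedule, and then read off a satisfying assignment from the teleoperated tasks. The paper's proof is far terser---it simply asserts the consistency claim ``due to overlap in their times'' and extracts the assignment by picking one teleoperated task per robot---whereas you spell out the timing contradiction via the per-robot identity $F_k = 2zv - t_k\,\delta z + W_k$ and define the assignment globally; your version is essentially a rigorous expansion of the same idea rather than a different route.
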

	\begin{proof} 
		Since every robot in $\psi$ has the same autonomous task completion time, each of them must have at least one effective task teleoperated to have a makespan less than $2zv$.  The tasks are arranged in a way that a variable and its negation's effective task cannot both present in a satisfying schedule (due to overlap in their times).  
		
		Therefore, we can choose any one teleoperated task from each robot mission, and set its corresponding literal to be true.  This will result in an assignment for which $\phi$ is true.
	\end{proof}

	Since the decision problem \textit{Min-Makespan} is in NP-Complete, the problem of finding the optimal teleoperation schedule in Problem~\ref{prob:main} is NP-Hard.  
	Since all constraints in our problem are linear time constraints, we formulate our problem as a mixed integer linear program (MILP), which has been used to formulate a wide range of NP-Hard problems.  In the next section, we provide details on how Problem~\ref{prob:main} can be encoded as a MILP.

	\section{MILP Formulation}
	\label{sec:MILP}
		
		
		
	
	In the MILP formulation, our objective is to find a schedule $\cl{S}$ that minimizes team makespan $\mu(\cl{S})$, subject to conditions on system dynamics and task ordering.
	%
	We begin by introducing three variables for each task: (1) $x^k_j$, a binary teleoperation variable for task $e^k_j$, (2) $\tau^k_j$, the scheduled start time for $e^k_j$, and (3) $\varepsilon^k_j$, the finish time for $e^k_j$, which can be expressed as a sum of the $\tau^k_j$ and the task completion time under the schedule, i.e.,
	$$\varepsilon^k_j = \tau^k_j + (1-x^k_j)\,\alpha^k_j + x^k_j\, \beta^k_j.$$
	A MILP can then be formulated as follows:
	\begin{align}
		\text{Minimize:}\quad &  \bar{\mu} \nonumber\\
		\text{Subject to:}\quad
		& \bar{\mu} \geq \varepsilon^k_{N^k}\quad\,\,\,\forall~k\in\cl{K}, \label{c1}\\[\vd]
		& \tau^k_1 \geq 0, \quad\,\,\,\,\,\forall~k\in\cl{K},  \label{c2}\\[\vd]     
		& \tau^k_{j} \geq \varepsilon^k_{j-1},\;\,\forall~k\in\cl{K},\, j \in \{2,\ldots,N^k\},\label{c3}\\[\vd]
		& x^k_j + x^l_i = 2 \implies
			\tau^k_j \geq \varepsilon^l_i \,\text{ or }\,
			\tau^l_i \geq \varepsilon^k_j,
		\nonumber\\ 
		&\quad\qquad\qquad\, \forall~k,l \in\cl{K};\, k\neq l,  \nonumber\\
		&\quad\qquad\qquad\, \forall~j \in \{1,\ldots, N^k\},\nonumber\\ 
		&\quad\qquad\qquad\, \forall~i \in \{1,\ldots, N^l\},\label{c4}\\[\vd]    
		%
		%
		\quad & x^k_j\in \{0,1\},\forall~k\in\cl{K},\, j\in \{1,\ldots,N^k\}\label{c5}.
	\end{align}
	
	Constraint~\eqref{c1} restricts the time needed for every robot to complete its mission to be not more than the objective $\bar{\mu}$.  Constraint~\eqref{c2} sets the earliest start time for the robots.  
	%
	Constraint~\eqref{c3} ensures that the $j^{th}$ task of a robot mission can only starts after the $j-1^{th}$ task is completed.
	%
	Constraint~\eqref{c5} restricts the variables $x^k_j$ to be a binary variable. 
	Constraint~\eqref{c4} specifies no two tasks can be teleoperated with an overlapping time interval.
	The exclusive disjunction (XOR) of two conditions is required due to the undetermined order of teleoperation of the two tasks.  Note that constraint~\eqref{c4} above is presented as an implication and is not written as a linear constraint.  However, it can be converted to a set of linear constraints (for example, by using the Big-M method), which are supported directly by many mixed integer linear program solvers \cite{brown2007formulating}.  
	
	\textbf{Note:} To implement constraint~\eqref{c4}, we can limit the ranges to $k\in \{1,\ldots,K-1\},\, l \in\{k,\ldots, K\}$, which eliminates the repetitions in constraint checking, thus more efficient.

	\subsection{Extension to Multiple Operators}
	It is worth noting that we can directly extend the MILP to handle the multi-operator-multi-robot setting.  In this case we have a set of $M$ operators $\cl{M} := \{1, \ldots, M\}$, and use binary variable $x^k_{jm}$ to indicate whether $e^k_j$ is teleoperated by operator $m\in\cl{M}$. 
	%
	Whether a task is teleoperated or not is now indicated by $\sum_{m\in\cl{M}} x^k_{jm}$, instead of $x^k_j$. Consequently, changes are made in expressions for $\varepsilon^k_j$ and Constraint~\eqref{c4}.  Constraint~\eqref{c5} is repeated for all $x^k_{jm}$.
	
	
	In addition, we need a constraint to bound $\sum_{m\in\cl{M}} x^k_{jm},$ since each task can be assigned to at most one operator:
	\begin{equation}
		\label{eq:one_operator}
		\sum_{m\in\cl{M}} x^k_{jm} \leq 1, \,\forall\, k \in \{1,\ldots,K\},\, j \in \{1,\ldots, N^k\}.
	\end{equation} 
	
	\subsection{Solving the MILP}
	A globally optimal solution to a MILP can be found using solvers like Gurobi or CPLEX.  However, as mentioned earlier, while such solvers are effective for small problem instance (i.e. 2 robots with 8 tasks each, such an instance takes about 8.3 sec for MILP), they do not scale to large instances (i.e. 3 robots with 15 tasks each, such an instance takes about 296 sec for MILP) involving many robots, each with ten or more tasks in its mission.  In the next section, we present an efficient algorithm that makes use of the problem structure to provide a fast and high-quality solution of Problem.~\ref{prob:main}.

	\section{Iterative Greedy}
		
		
	In this section, we present a greedy algorithm called $\mathtt{Iterative\, Greedy}$.  The algorithm begins by greedily creating a schedule to improve the team's makespan, until no further improvements can be made by adding tasks of a makespan robot to the schedule.  Our key insight here is to then identify \textit{blocking tasks} in such greedily-created schedules and iteratively remove those blockages to improve the solution.  The algorithm cycles between two routines: Greedy Insertion and Block Removal.
	
	\subsection{Greedy Insertion}
	This routine creates a teleoperation schedule by greedily selecting tasks from the mission of a robot whose total time currently equals the makespan (called a makespan robot).  
	\begin{definition}  [Greedy Insertion] 
		For a given schedule $\cl{S}$, let robot $k$ be a robot achieving the makespan (i.e., last task's finish time $\varepsilon^k_{N^k} = \mu(\cl{S})$).  We call the addition of a task $e^k_i$ to schedule $\cl{S}$ a Greedy Insertion if the addition of $e^k_i$ directly reduces $\varepsilon^k_{N^k}$, without increasing the team makespan.
		\label{def:direct}
	\end{definition}
	
	Pseudo-code for the $\mathtt{Greedy\, Insertion}$ algorithm is presented in Algorithm \ref{alg:direct}.  
	In the algorithm, given a schedule $\cl{S}$, we first identify the set of all makespan robots, denoted as $\overline{\cl{K}}$.  We then determine the \textit{best} task ${e}^k$, defined as the task that reduces $\varepsilon^k_{N^k}$, the mission time of any robot $k\in\overline{\cl{K}}$ by the most, while not increasing the makespan $\mu(\cl{S})$.  This task is then inserted in the schedule.  
	
	
	\begin{algorithm}
		\caption{$\mathtt{Greedy\, Insertion}$}
		\begin{algorithmic}[1]
			\renewcommand{\algorithmicrequire}{\textbf{Input:}}
			\renewcommand{\algorithmicensure}{\textbf{Output:}}
			\REQUIRE $\cl{P}$, $\cl{S}$
			\ENSURE  $\cl{S}'$
			\STATE Initialize $\Delta {\varepsilon}^* = 0$, $\cl{S}' = \cl{S}$ \\
			\STATE Calculate mission time $\varepsilon^k_{N^k}$ for $k\in\{1,\ldots,K\}$ given $\cl{P}$, $\cl{S}$
			\STATE $\overline{\cl{K}} \gets \arg\max_k \{\varepsilon^1_{N^1},\ldots,\varepsilon^K_{N^K}\}$
			\FOR{$k \in \overline{\cl{K}}$}
			\STATE Find the \texttt{Best} task ${e}^{k}$, with time reduction $\Delta {\varepsilon}^k_{N^k}$ and corresponding schedule $\cl{S}^k$ given $\cl{P}$, $\cl{S}$
			\IF{$\Delta {\varepsilon}^k > \Delta {\varepsilon}^*$}
			\STATE $\Delta {\varepsilon}^* \gets \Delta {\varepsilon}^k_{N^k}$; 
			$\cl{S}' \gets \cl{S}^k$
			\ENDIF
			\ENDFOR
			\RETURN $\cl{S}'$
		\end{algorithmic} 
		\label{alg:direct}
	\end{algorithm}
	
	\textbf{Note:} The \texttt{best} task in the Greedy Insertion algorithm is defined as the one which results in the most reduction in mission time of any makespan robot.
	
	An example is shown in Fig.~\ref{fig:direct} to illustrate its operation.  Robot 3 is the makespan robot, and has two tasks currently not in the schedule.  Select the one with more time reduction, and this addition to the schedule will reduce $\mu(\cl{S})$.
	\begin{figure}[thpb]
		\centering
		\includegraphics[width=0.8\columnwidth]{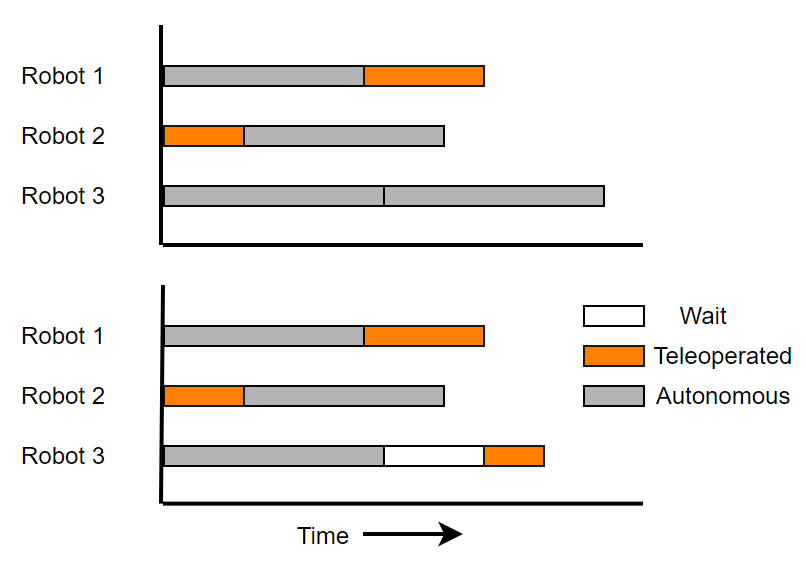}
		\caption{Example of Greedy Insertion. Robot 3 is the makespan robot, and by teleoperating its last task, we reduce its total mission time.}
		\vspace{-2ex}  
		\label{fig:direct}
	\end{figure}

	\subsection{Block Removal}
	A schedule created using $\mathtt{Greedy\, Insertion}$ gives a feasible (locally optimal) solution but such a schedule often results in considerable and frequent idle times for the operator.  However, even when the system's makespan cannot be improved further by adding more tasks of makespan robots to the schedule, it may be possible to improve the makespan by adding tasks from other robots.  This is the idea behind the Block Removal technique, which works by finding and eliminating \textit{blocking tasks} and reduces the team makespan by reducing waiting times in the schedule.  We begin to introduce the details of this technique with the following definitions.
	\begin{definition} [Idle Time]   
		For any two adjacent tasks $s_j$ and $s_{j+1}$ in schedule $\cl{S} = \langle s_1, \dots, s_n \rangle$, the idle time is defined as the time between task $s_j$ finish time and $s_{j+1}$ start time.  For $s_1$, if its start time $>0$, idle time is simply the start time of itself.
		\label{def:idletime}
	\end{definition}
	
	\begin{definition} [Blocking Task and Blocking Robot]  
		\label{def:block}
		A task $s_{j+1}$ in schedule $\cl{S}$ is called a blocking task if the idle time between $s_j$ and $s_{j+1}$ is greater than zero\footnote{Depending on the application, it may be useful to set a threshold $\epsilon \in \real_{>0}$ on the idle time between $s_j$ and $s_{j+1}$ to consider $s_{j+1}$ as a blocking task. For example, we can set $\epsilon = min\{\beta^k_j\}$. In this case, if there is an idle time less than the minimum teleoperation time, inserting any task here only delays the execution of later tasks in the schedule. Thus, such an idle time cannot help improve the makespan and we may skip it.}.  The robot to which task $s_{j+1}$ belongs to is called a blocking robot.
	\end{definition}
	A blocking task is called so because it prevents a task in the makespan robot's plan from getting teleoperated or being teleoperated at an earlier time.  Reducing the starting time of the blocking task indirectly results in a smaller makespan or allows for further makespan decrease in future iterations.
	
	With above, the Block Removal operation can be defined.
	\begin{definition} [Block Removal] 
		Given a schedule $\cl{S}$, let robot $k$ be a robot achieving the makespan (i.e., $\varepsilon^k_{N^k} = \mu$).  We call the addition of a task $e^{k'}_i$ from a non-makespan robot $k'$ to the schedule $\cl{S}$ a Block Removal if the addition of $e^{k'}_i$ reduces or allow futher reduction on $\varepsilon^k_{N^k}$, without increasing the team makespan.  Such addition results in removal of blockage (idle time removed or reduced) by the robot $k'$ in the schedule.
		\label{def:Indirect}
	\end{definition}
	
	Pseudo-code for the $\mathtt{Block\, Removal}$ algorithm is presented in Algorithm \ref{alg:block}.  In the algorithm, we start by finding the blocking task in the schedule with the largest start time. This is because for most of time, there is no idle time between blocking task with the latest start time and the makespan robot's last teleoperated edge, and blocking can be resolved efficiently. We then try to add a task from the blocking robot's mission to the schedule such that it reduces the start time of the blocking task. If such an addition is possible, we return the updated schedule, else we discard this task and move to the blocking task with next largest starting time, until we reach the beginning of the schedule.
	\begin{algorithm}
		\caption{$\mathtt{Block\, Removal}$}
		\begin{algorithmic}[1]
			\renewcommand{\algorithmicrequire}{\textbf{Input:}}
			\renewcommand{\algorithmicensure}{\textbf{Output:}}
			\REQUIRE $\cl{P}$, $\cl{S}$
			\ENSURE  $\cl{S}'$
			\STATE Initialize: $\cl{S}' = \cl{S}$
			\STATE $\overline{s} \gets$ blocking task with largest starting time \label{line:s}
			\STATE Find task $e'$ that reduces the start time of $\overline{s}$
			\IF{$e'$ exists}
			\RETURN Updated schedule $\cl{S}'$
			\ELSE 
			\STATE Go to line~\ref{line:s} and repeat for blocking task with next largest start time until no more blocking tasks are present 
			\ENDIF
			\RETURN $\cl{S}'$
		\end{algorithmic} 
		\label{alg:block}
	\end{algorithm}
	
	An example is shown in Fig.~\ref{fig:indirect} to illustrate this operation.  Given the schedule generated in Fig.~\ref{fig:direct}, further Greedy Insertion is not possible.  Adding task of $e^3_1$ to the schedule does not reduce makespan because the task final task of Robot3, $e^3_2$, will have to wait until the operator finishes the task $e^1_2$ (the blocking task).  Instead, if we add $e^1_1$ to $\cl{S}$, it reduces the makespan by reducing the start time of the blocking task $e^1_2$. 
	
	\begin{figure}[thpb]
		\centering
		\includegraphics[width=0.8\columnwidth]{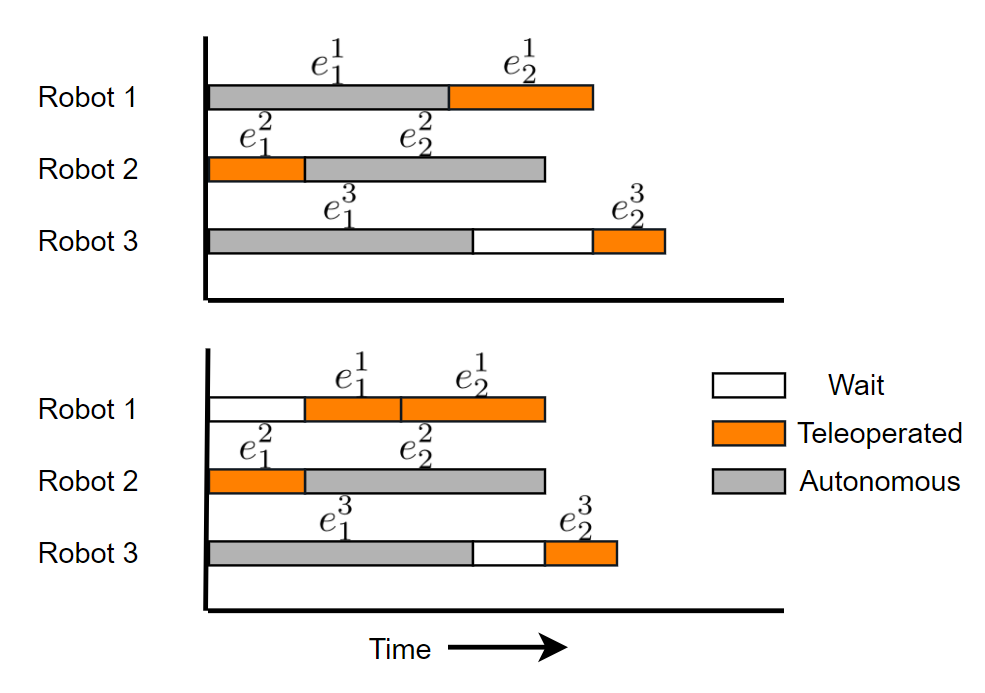}
		\caption{Example for Block Removal. Makespan Robot 3's mission time is reduced indirectly by teleoperating $e^1_1$.}
		\vspace{-2ex}
		\label{fig:indirect}
	\end{figure}

	\subsection{Iterative Greedy}
	Starting with an empty teleoperation schedule, the Iterative Greedy algorithm first generates an intermediate schedule using Alg.~\ref{alg:direct}.  Using this schedule, we try the Block Removal routine using Alg.~\ref{alg:block}.  The schedule is iteratively improved by applying Alg.~\ref{alg:direct} and Alg.~\ref{alg:block} one after the other, until both of these algorithms stop to make improvements in a given schedule $\cl{S}$, which is then selected as the final output.
	
	\begin{algorithm}
		\caption{$\mathtt{Iterative\, Greedy}$}
		\begin{algorithmic}[1]
			\renewcommand{\algorithmicrequire}{\textbf{Input:}}
			\renewcommand{\algorithmicensure}{\textbf{Output:}}
			\REQUIRE $\cl{P}$
			\ENSURE  $\cl{S}$
			\\\textit{Initialization} : $\cl{S} = [\textbf{ }]$, $done = 0$.
			\\\WHILE {\NOT $done$}
			\STATE $\cl{S}'$ $\gets$ Greedy Insertion($\cl{P}$, $\cl{S}$)
			\IF{$\cl{S}' = \cl{S}$}
			\STATE $\cl{S}' \gets$ Block Removal($\cl{P}$, $\cl{S}$)
			\IF{$\cl{S}' = \cl{S}$}
			\STATE $done = 1$
			\ENDIF
			\ENDIF
			\STATE $\cl{S} \gets \cl{S}'$
			\ENDWHILE
			\RETURN $\cl{S}$
			
		\end{algorithmic} 
		\label{alg:iterative}
	\end{algorithm}
	
	
	
	Runtime of $\mathtt{Iterative\, Greedy}$:  Letting $\bar N := \sum_{k=1}^K N^k$, each iteration of $\mathtt{Greedy\, Insertion}$ can be implemented to run in $O(\bar N)$ time.  Similarly, each iteration of $\mathtt{Block\, Removal}$ runs in $O(\bar N)$ time.  Since at most $\bar N$ tasks can be added to the schedule, the overall runtime of $\mathtt{Iterative\, Greedy}$ is bounded by $O(\bar N^2)$.

	\section{Evaluation} \label{sec:eval}
	In this section, we present performance results for a simulated multi-robot scheduling problem under the following methods (described in Section~\ref{sec:baselines}): 1) Optimal schedule (solution of the MILP formulation), 2) {Iterative Greedy}, 3) {Greedy Insertion}, 4) {Comparison Greedy}, and 5) {Na\"ive Greedy}.  
	The problem and the solution frameworks for all algorithms were implemented using Python.  The Gurobi Python API is used for the MILP solution.
	
	
	To generate an instance, for each task of each robot, two numbers are sampled from a uniform random distribution and are rounded to 2 decimal places. One is used as the task working time under teleoperation $\beta^k_j$, and the sum of two is used as the autonomous time $\alpha^k_j$:
	\begin{align}
		\beta^k_j \sim U[10,20], &\;\; \Delta \tau^k_j \sim U[0,10], \nonumber\\
		\alpha^j_j \gets \beta^k_j &+ \Delta \tau^k_j.
		\label{eq:sample_time}
	\end{align}

	\subsection{Baseline Algorithms}
	\label{sec:baselines}
	We consider the following baseline solution methods to assess the performance of the Iterative Greedy algorithm.
	
	\textbf{MILP Solution: }
	The MILP formulation in Section \ref{sec:MILP} is implemented and solved with Python Gurobi API.  Solving the formulation directly gives us $x^k_j$ and $\tau^k_j$ for each task.  
	
	\textbf{Na\"ive Greedy: }
	Under this algorithm, the operator is simply scheduled to teleoperate the next available task of the makespan robot.  If the makespan robot is still executing a task, the operator waits for the robot.
	
	\textbf{Comparison Greedy: }
	We have also developed the Comparison Greedy algorithm, which compares between alternatives given an intermediate schedule.  We compute the finish time of the last task in the current schedule, and determine the task $e^k_j$ that the makespan robot will be executing at that time.  We then pick the better of the two alternatives: 1) Adding $e^k_j$ to the schedule, and have the makespan robot wait for the operator at start of $e^k_j$, or 2) Adding $e^k_{j+1}$ to the schedule and have the operator wait for the makespan robot to complete~$e^k_j$.
	
	
	\textbf{Greedy Insertion: }
	To assess the improvement brought by the Block Removal step, we compare the schedule generated by only Greedy Insertion defined in Algorithm~\ref{alg:direct}.

	\subsection{Scalability Test} 
	We begin the evaluations by looking at the computation time of MILP and Iterative Greedy on different problem sizes (number of robots and tasks in their missions), as specified in Table~\ref{table_1}. The computation times shown in the table are the average of 100 instances for each case.  Along both dimensions of the problem size, the number of robots and number of tasks, the computation time of MILP increases at a very high rate.  Computation time of Iterative Greedy algorithm remains below 0.01 seconds for all test cases in Table~\ref{table_1}.  Even for larger instances, where MILP solution is unavailable, the computation time of Iterative Greedy algorithm grows at a much slower rate.  For example, for a problem instance with $4$ robots and $40$ tasks each, its average computation time is $5.22$ seconds. 
	For reference, the simulations were run on a laptop computer with 4 core, 2.1 GHz processor and 16 GB RAM.

	\begin{table}[h]
		\caption{CPU Time of MILP and Iterative Greedy (in seconds)}
		\label{table_1}
		\begin{center}
			\begin{tabular}{cccccc}
				\toprule
				& \makecell{$K = 2$\\ $N^k = 11$}& \makecell{$K = 3$\\ $N^k = 5$}& \makecell{$K = 3$\\ $N^k = 8$}& \makecell{$K = 3$\\ $N^k = 11$} &
				\makecell{$K = 4$\\ $N^k = 11$}\\
				\midrule \\[-1.5ex]
				MILP  & 0.30 &  0.4& 0.80 & 10.16 & 109.22\\[\vd]
				\makecell{Iterative\\ Greedy} & <\,0.01 & <\,0.01 & <\,0.01  & <\,0.01  & <\,0.01   \\
				\bottomrule
			\end{tabular}
		\end{center}
	\end{table}
	\vspace{-2ex}
	
	\subsection{Comparison with the Optimal Schedule}
	The Iterative Greedy algorithm is compared against the optimal schedule to validate its applicability for our problem.  The optimal schedule using MILP formulation cannot be computed for larger problem instances, due to its poor scalability, therefore this test is limited to small-sized problems. 
	The relative performance (ratio of the makespan under Iterative Greedy algorithm to the optimal schedule) is shown in Fig.~\ref{fig:compareOptimal}.
	For each size, $100$ instances were generated using the random instance generation mentioned earlier.
	
	We observe that the performance of the Iterative Greedy algorithm is comparable to that of optimal schedule.  As the number of robots increases, the distribution of relative performance slowly shifts away from 1.  However, the makespan under the Iterative Greedy algorithm is still within $5\%$ of the optimal schedule for over $90\%$ of the instances under all test cases.  For reference, the team makespan without teleoperation is, on average, $20.73\%$ more than the optimal for these test cases.
	
	\begin{figure}[]
		\centering
		\includegraphics[width=\columnwidth]{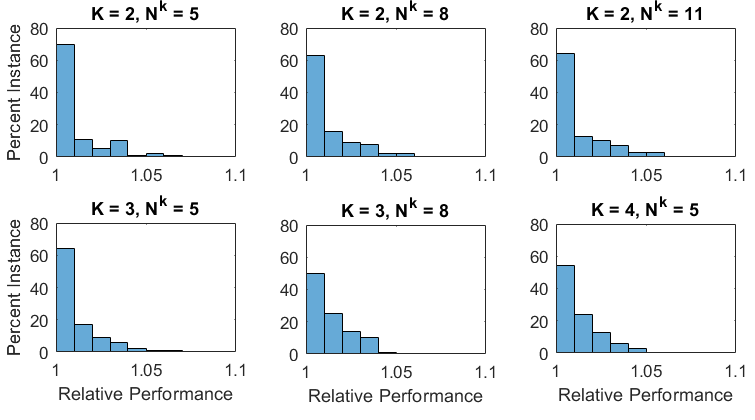}
		\caption {Relative performance of the Iterative Greedy methods compared to the optimal solution for number of robots $K\in\{2,3,4\}$, and number of tasks $N^k\in \{5, 8, 11\}$ for all robots.  Each plot shows the distribution of 100 instances based on their relative performance (ratio of makespan under Iterative Greedy method to the optimal makespan).}
		\vspace{-2ex}
		\label{fig:compareOptimal}
	\end{figure}
	
	
	\subsection{Comparison with other Greedy Algorithms} \label{sec:comparisonGreedy}
	Next, we compare the performance of the Iterative Greedy algorithm with the Greedy Insertion, Comparison Greedy and Na\"ive Greedy algorithms on larger problem instances.  Note that it is also possible to combine the Iterative Greedy algorithm with any of these greedy algorithms.  We include performance results from such combinations to demonstrate its effects on greedily-generated schedules.
	For the comparison, under each test condition (given number of robots and tasks in their missions), $100$ problem instances are created in a similar way as before.
	Fig.~\ref{fig:compareOthers} shows performance comparison of the different algorithms.
	Iterative Greedy has the best performance among all algorithms, and we observe $6$ to $10\%$ improvement over the baseline Na\"ive Greedy algorithm for small to moderate problem size.  
	We observe that, as the number of robots increases, the difference between the performance of all algorithms start to diminish.  This supports the intuition that as number of robots increases, the human operator is required to distribute their time to more and more robots, thus decreasing their effectiveness.  
	From the plots, we also observe the effectiveness of the Iterative Greedy in improving relative performance when applied in combination with Na\"ive Greedy and Comparison Greedy.  This indicates that the Iterative Greedy technique can be used to further improve any greedily-generated schedule. 

	\begin{figure}[]
		\centering
		\includegraphics[width=\columnwidth]{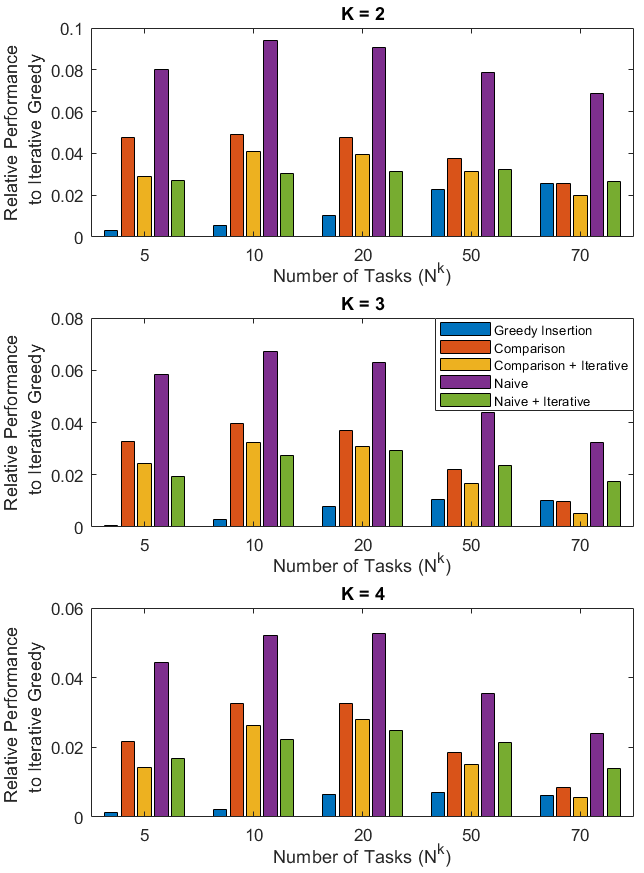}
		\caption {Performance comparison of baseline greedy solution techniques relative to the proposed Iterative Greedy algorithm.  The plots show relative performance of different techniques for up to $4$ robots and $70$ tasks each.}
		\label{fig:compareOthers}
	\end{figure}
	
	\subsection{Example Problem Instance}
	
	\begin{figure}[]
		\centering
		\includegraphics[width=\columnwidth]{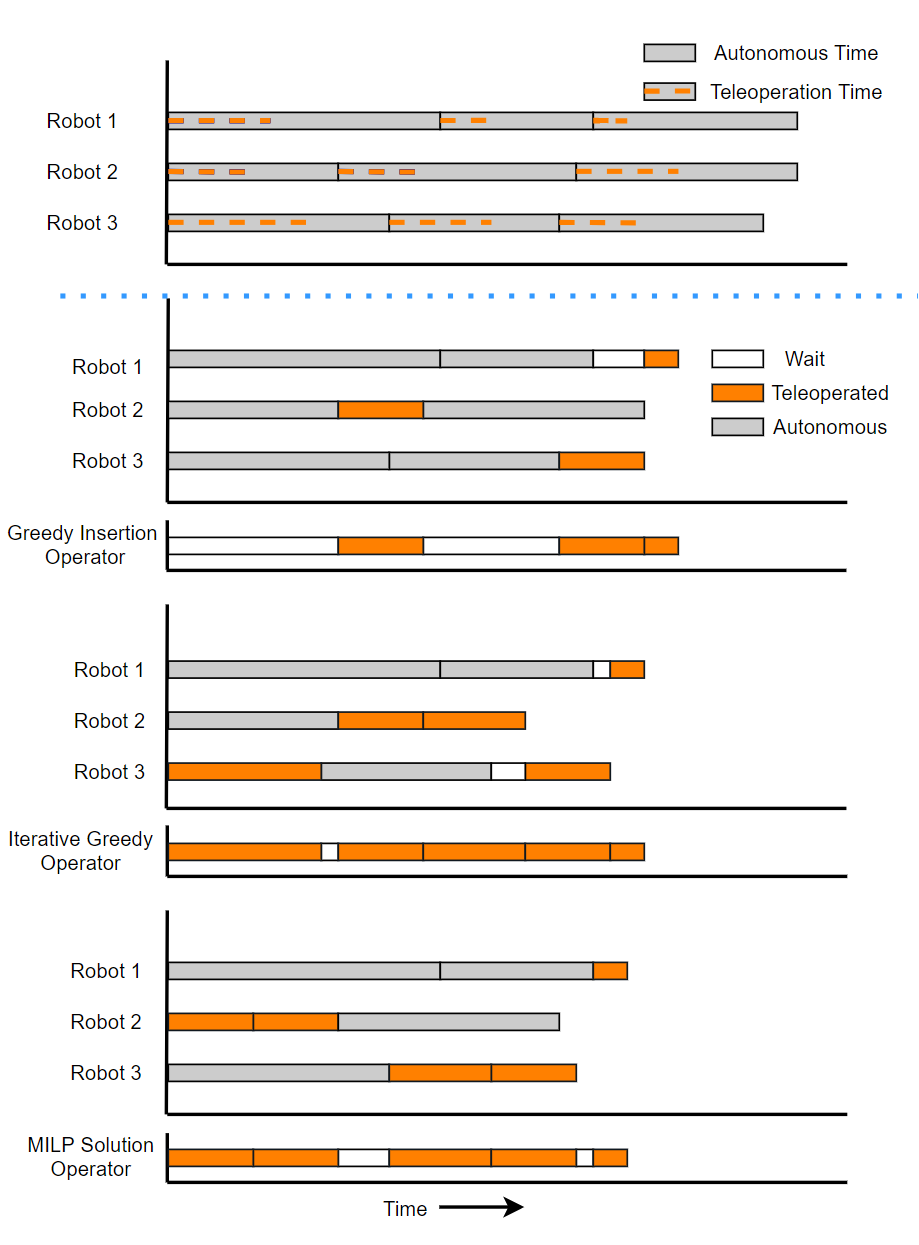}
		\caption {Scheduling of Iterative Greedy, Greedy Insertion and MILP Solution on a Multi-robot Mission Instance.}
		\label{fig:Example}
	\end{figure}
	
	Fig.~\ref{fig:Example} shows an example instance of the scheduling problem with three robots.  First, the Greedy Insertion algorithm generates a schedule that reduces makespan but contains long gaps (idle time) in operator's schedule.  Then the Block Removal algorithm removes these gaps and results in a schedule with very little idle time for the operator.  The MILP solution shows that a better performing schedule is possible even with a greater idle time. 
	
	\section{Conclusions and Discussions}
	
	In this paper, we present a problem of scheduling a human operator to a team of multiple robots, such that the team makespan is minimized.  We show that this problem is NP-Hard and develop the Iterative Greedy algorithm that cycles through two sub-routines: Greedy Insertion and Block Removal.  This algorithm generates a greedy schedule in each iteration, and improves it by removing \textit{blockages} when needed.  The algorithm scales well with problem size and produce smaller makespan than other greedy solution techniques.  It is also shown that the Iterative Greedy algorithm can be applied to any greedily-generated schedule to further improve the performance.  
	For future research, our goal is to further develop the model by allowing imperfect information and possibility of mission re-planning for the robots.  The solution technique will also benefit from the ability to adapt the current schedule online based on new observations.
	
	\bibliographystyle{IEEEtran}

\end{document}